\documentclass[letterpaper, 10 pt, conference]{ieeeconf}  

\IEEEoverridecommandlockouts                              

\usepackage{amsmath,amssymb}
\usepackage{mathtools}
\usepackage{bm}
\usepackage{graphicx}
\usepackage{algorithm}
\usepackage{algpseudocode}
\usepackage{booktabs}
\usepackage{xcolor}
\usepackage{comment}
\usepackage{hyperref}
\usepackage{cleveref}
\usepackage{tikz}
\usepackage{cite}
\usetikzlibrary{calc,positioning,arrows.meta,backgrounds,shapes.geometric}

\newtheorem{theorem}{Theorem}
\newtheorem{proposition}{Proposition}
\newtheorem{assumption}{Assumption}
\newtheorem{definition}{Definition}

\newcommand{\N}{\mathbb{N}}
\newcommand{\col}{\mathrm{col}}
\newcommand{\Ball}[2]{B_{#1}\!\left(#2\right)}

\title{\LARGE \bf
	Distributed Model Predictive Control with Connectivity-based Contracts
}

\author{Jorit Geurts, Danilo Saccani, Melanie N. Zeilinger, Andrea Carron
	\thanks{This research has been supported by the Swiss National Science Foundation under the NCCR Automation (grant 51NF40\_225155).}
	\thanks{J. Geurts, A. Carron and M.N. Zeilinger are with the Institute for Dynamic Systems and Control, ETH Zurich, Switzerland. (email \texttt{\{jgeurts,carrona,mzeilinger\}@ethz.ch)}  }%
	\thanks{D. Saccani is with the Institute of Mechanical Engineering, École Polytechnique Fédérale de Lausanne (EPFL), CH-1015 Lausanne, Switzerland. (email: \texttt{danilo.saccani@epfl.ch})}%
}

\begin{document}

\maketitle
\thispagestyle{empty}
\pagestyle{empty}

\begin{abstract}
	Teams of mobile robots rely on continuous communication with their
	neighbors for coordination, yet most distributed model predictive control
	(DMPC) schemes assume the communication network stays connected rather
	than actively enforcing it. Adding such a guarantee is hard since the
	usual mathematical condition for connectivity is nonconvex and links every
	agent to every other, which is incompatible with a scalable distributed real-time
	controller.
	We propose a DMPC framework in which each agent is assigned a
	connectivity contract: a local region prescribing where its predicted
	positions may lie over the prediction horizon. The contracts are designed
	so that, as long as every agent stays within its own contract, the team
	is guaranteed to remain connected. Given the maintained contract graph, an agent builds its contract from a
	single exchange with its immediate neighbors, after which every agent
	solves its own optimization problem independently. We prove that the resulting closed-loop system maintains
	connectivity, avoids collisions, and respects local state and input
	constraints. Simulation and hardware experiments on miniature autonomous
	car-like robots demonstrate the approach.
\end{abstract}


\section{Introduction}

Multi-agent robotic systems are increasingly explored for tasks that are difficult or inefficient for a single robot, such as search and rescue, cooperative
transportation, environmental monitoring, and exploration. For these systems,
model predictive control (MPC) is an attractive framework because it optimizes
task-dependent objectives while explicitly handling state and input
constraints~\cite{negenborn2009multi}. Centralized MPC, however, scales poorly and introduces a single point of failure. Distributed MPC (DMPC) alleviates these issues by letting agents solve
local optimization problems while exchanging information with nearby
agents~\cite{scattolini2009architectures,muller2017economic,conte2016distributed,boyd2011distributed}.
DMPC relies on a reliable, possibly multi-hop, communication network among the agents. However, most DMPC schemes assume that this network remains connected instead of actively enforcing it~\cite{conte2016distributed}. If connectivity
is lost, information may no longer propagate through the swarm, which can
compromise safety, coordination and controller
performance~\cite{jadbabaie2003coordination,olfati2004consensus,fax2004information}.
Enforcing connectivity inside DMPC is challenging because the connectivity constraint is intrinsically nonconvex and tightly coupled to the relative positions of all agents, making it hard to embed in a scalable, distributed, real-time controller. In this work, we address this gap through a contract-based formulation in which each agent is constrained to a local region built from one-hop information (see Fig.~\ref{fig:concept}).

\begin{figure}[!t]
	\centering
	\begin{tikzpicture}[
			scale=1.0,
			every node/.style={font=\footnotesize},
			tree/.style={very thick,gray!55,line cap=round},
			goal/.style={star,star points=5,star point ratio=0.45,fill=#1!75!black,draw=#1!55!black,inner sep=1.1pt,thin},
			goal/.default=blue,
			trajline/.style={thick,#1!75!black,opacity=0.55,line cap=round},
			trajline/.default=blue,
			annot/.style={font=\footnotesize\itshape},
			leader/.style={-,thin,gray!55!black,shorten >=1pt},
		]
		\coordinate (A0) at (-2.2, 0.4);  \coordinate (A1) at (-2.4, 0.7);
		\coordinate (A2) at (-2.6, 1.0);  \coordinate (A3) at (-2.8, 1.3);

		\coordinate (B0) at (-0.3, 1.3);  \coordinate (B1) at (-0.3, 1.53);
		\coordinate (B2) at (-0.3, 1.77); \coordinate (B3) at (-0.3, 2.0);

		\coordinate (C0) at ( 1.8, 0.5);  \coordinate (C1) at ( 2.0, 0.27);
		\coordinate (C2) at ( 2.2, 0.03); \coordinate (C3) at ( 2.4,-0.2);

		\coordinate (D0) at ( 0.4,-1.3);  \coordinate (D1) at ( 0.6,-1.5);
		\coordinate (D2) at ( 0.8,-1.7);  \coordinate (D3) at ( 1.0,-1.9);

		\foreach \k in {0,1,2,3}{
				\coordinate (Mab\k) at ($(A\k)!0.5!(B\k)$);
				\coordinate (Mbc\k) at ($(B\k)!0.5!(C\k)$);
				\coordinate (Mcd\k) at ($(C\k)!0.5!(D\k)$);
			}

		\foreach \k/\op in {3/0.10, 2/0.18, 1/0.28, 0/0.45}{
				\draw[draw=blue!55!black,thick,fill=blue!10,fill opacity=\op] (Mab\k) circle (1.55);
				\draw[draw=blue!55!black,thick,fill=blue!10,fill opacity=\op] (Mbc\k) circle (1.55);
				\draw[draw=blue!55!black,thick,fill=blue!10,fill opacity=\op] (Mcd\k) circle (1.55);
			}

		\begin{scope}
			\clip (Mab0) circle (1.55);
			\clip (Mbc0) circle (1.55);
			\fill[blue!55!black,opacity=0.3] (-5,-5) rectangle (5,5);
		\end{scope}

		\draw[tree] (A0) -- (B0);
		\draw[tree] (B0) -- (C0);
		\draw[tree] (C0) -- (D0);

		\foreach \name/\col in {A/red,B/blue,C/green!45!black,D/orange}{
				\draw[trajline=\col] (\name0) -- (\name1) -- (\name2) -- (\name3);
				\foreach \k/\op in {1/0.55,2/0.35,3/0.2}{
						\fill[\col!75!black,opacity=\op] (\name\k) circle (0.045);
					}
			}

		\node[goal=red]            at (A3) {};
		\node[goal=blue]           at (B3) {};
		\node[goal=green!45!black] at (C3) {};
		\node[goal=orange]         at (D3) {};

		\foreach \name/\col/\ang in {A/red!85!black/124, B/blue!80!black/90, C/green!45!black/-49, D/orange!90!black/-45}{
				\begin{scope}[shift={(\name0)}, rotate=\ang]
					\fill[\col,draw=black,thick,rounded corners=1.2pt] (-0.22,-0.12) rectangle (0.22, 0.12);
					\fill[\col!65!black,draw=black,thin,rounded corners=0.6pt] (-0.06,-0.08) rectangle (0.13, 0.08);
					\fill[white,opacity=0.45] (0.08,-0.07) -- (0.12,-0.05) -- (0.12,0.05) -- (0.08,0.07) -- cycle;
					\fill[black,rounded corners=0.3pt] (-0.18,-0.15) rectangle (-0.10,-0.10);
					\fill[black,rounded corners=0.3pt] (-0.18, 0.10) rectangle (-0.10, 0.15);
					\fill[black,rounded corners=0.3pt] ( 0.10,-0.15) rectangle ( 0.18,-0.10);
					\fill[black,rounded corners=0.3pt] ( 0.10, 0.10) rectangle ( 0.18, 0.15);
				\end{scope}
			}

		\node[annot,blue!55!black] (Lball) at (-2.6, 3.9) {pairwise ball $\mathcal{B}_{ij,k|t}$};
		\draw[leader,blue!55!black] (Lball.south) to[bend right=8] ($(Mab3)+(0.0,1.55)$);

		\node[annot,blue!55!black] (Lcontract) at (3.7, 3.5) {contract $\mathcal{C}_{i,k|t}$};
		\draw[leader,blue!55!black] (Lcontract.south west) to[bend right=15] ($(B0)+(0.55,0.1)$);

		\node[annot,gray!45!black] (Llink) at (3.7, 1.6) {comm.\ link};
		\draw[leader,gray!45!black] (Llink.west) to[bend right=8] ($(Mbc0)+(0.2,-0.05)$);

		\node[annot] (Lrobot) at (-3.4,-0.7) {robot};
		\draw[leader] (Lrobot.east) to[bend right=8] ($(A0)+(-0.18,-0.05)$);

		\node[annot] (Ltarget) at (1.6,-3.0) {target};
		\draw[leader] (Ltarget.north) to[bend left=8] ($(D3)+(0.1,-0.05)$);

		\node[annot,red!55!black] (Ltraj) at (-2.0,-1.6) {predicted trajectory};
		\draw[leader,red!55!black] (Ltraj.east) to[bend right=10] ($(D2)+(-0.05,0.05)$);
	\end{tikzpicture}
	\caption{High-level overview of the proposed contract-based DMPC. Car-like
		robots (top view) exchange predicted trajectories only with one-hop
		neighbors along the communication graph (gray links). For each
		link, the two robots agree to remain in a shared pairwise ball
		of radius $r_{\mathrm{com}}/2$, which keeps them within communication
		range $r_{\mathrm{com}}$. Each robot's contract is the
		intersection of its incident balls (darker region, central robot);
		contracts shrink along the prediction horizon (fading balls), and each
		robot's predicted trajectory (fading dots) stays inside its own
		contract. Every robot then solves its own MPC to reach its target
		(star).}
	\label{fig:concept}
\end{figure}

\subsubsection*{Related work}
Several families of methods have been proposed for connectivity maintenance. Potential-field and gradient-based methods combine collision avoidance and connectivity through local artificial potentials~\cite{zavlanos2007potential,hsieh2008maintaining}. They are distributed but often conservative, and do not naturally accommodate state/input constraints or task-level objectives.

Graph-theoretic methods based on algebraic connectivity maintain the positivity
of the Fiedler eigenvalue $\lambda_2(L)$ of the graph Laplacian $L$.
Since $\lambda_2(L)>0$ if and only if the
communication graph is connected, this quantity provides a natural certificate
for connectivity. Existing methods enforce this condition through gradient-like
controllers, distributed eigenvalue estimation, or passivity-based
strategies~\cite{bullo2018lectures,mohar1991laplacian,degennaro2006decentralized,zavlanos2005controlling,yang2010decentralized,sabattini2012decentralized,zavlanos2011graph,ji2007distributed,giordano2013passivity}.
Compared to potential-field methods, these approaches can be less conservative and have been validated experimentally~\cite{giordano2013passivity}, but they are typically standalone connectivity controllers rather than unified constrained optimization frameworks.

Geometric constraint-set methods replace the eigenvalue condition with local
constraints on relative positions~\cite{ando1999distributed,notarstefano2006maintaining,spanos2004robust}.
Spanning-tree variants reduce conservatism by preserving only a connected subset
of links, but existing schemes are mostly reactive safety mechanisms and do not
directly address constrained optimal control with task-level objectives.

Connectivity constraints have also been embedded directly into MPC. The work in~\cite{carron2023multi} imposes the algebraic-connectivity condition $\lambda_2(L(p))>0$ as an explicit constraint and derives a sequential quadratic programming (SQP) reformulation amenable to distributed implementation. While principled, this introduces nonlinear coupling among agents and leads to a computationally demanding problem.

Contract-based DMPC has recently been used to decouple
collision-avoidance constraints by having agents commit to local sets over the
prediction horizon~\cite{bodmer2026anytime}; however, connectivity has not yet
been treated within this contract framework.

\subsubsection*{Contribution}
This paper proposes a contract-based DMPC framework that enforces
connectivity of the time-varying communication graph while optimizing
task-specific objectives under local state, input, and collision-avoidance
constraints, using only neighbor-to-neighbor communication.
Our contribution is threefold:
\begin{enumerate}
	\item \emph{Connectivity contracts:} convex, time-varying local sets assigned to each agent over the prediction horizon, whose satisfaction preserves a connected spanning subgraph and thus replaces the nonconvex Fiedler-eigenvalue constraint of~\cite{carron2023multi}. The resulting local finite horizon optimal control problem (FHOCP) is fully decoupled and requires no iterative coordination among agents.
	\item \emph{Theoretical guarantees:} validity conditions for the contracts and a proof of recursive feasibility and constraint satisfaction.
	\item \emph{Experimental validation:} simulation and hardware experiments on miniature autonomous car-like robots.
\end{enumerate}

The remainder of the paper is organized as follows. Section~\ref{sec:problem} formulates the connectivity-constrained DMPC problem. Section~\ref{sec:contract_dmpc} introduces the contract-based DMPC framework, presents the construction of the connectivity contracts, and proves recursive feasibility and closed-loop constraint satisfaction. Section~\ref{sec:experiments} reports the simulation and hardware experiments, and Section~\ref{sec:conclusions} concludes the paper.

\section{Problem formulation}\label{sec:problem}

We consider a swarm of agents indexed by
$\mathcal{M}:=\{1,\ldots,M\}$. Each agent $i\in\mathcal{M}$ is described by
\begin{align}
	x_i(t+1) & = f_i(x_i(t),u_i(t)), \label{eq:agent_dyn} \\
	p_i(t)   & = C_i x_i(t), \label{eq:agent_pos}
\end{align}
where $x_i\in\mathbb{R}^{n_i}$, $u_i\in\mathbb{R}^{m_i}$, and
$p_i\in\mathbb{R}^{n_p}$ are the state, input, and position, respectively.
Without loss of generality, we order the state so that the position components
are the first $n_p$ entries; hence $C_i = [I_{n_p}\ \ 0]$.
We make the following assumption on the system dynamics.

\begin{assumption}[Position invariance]\label{ass:posinv}
	For each agent $i\in\mathcal{M}$ and translation
	$\Delta p\in\mathbb{R}^{n_p}$, let
	$\varphi_i(\Delta p):=[\Delta p^\top\ 0^\top]^\top
		\in\mathbb{R}^{n_i}$.
	The dynamics are invariant with respect to translations of the absolute
	position, i.e.,
	\begin{align}
		f_i(x_i+\varphi_i(\Delta p),u_i)
		=
		f_i(x_i,u_i)+\varphi_i(\Delta p),
		\label{eq:position_invariance}
	\end{align}
	for all admissible $x_i$, $u_i$, and $\Delta p$.
\end{assumption}

This assumption holds for many robotic systems in which the position evolves as
the output of an integrator and the dynamics do not depend explicitly on the absolute position~\cite{carron2023multi,bodmer2026anytime, saccani2023model}.

Each agent is subject to local state and input constraints
\begin{align}
	x_i(t)\in\mathcal{X}_i,\qquad
	u_i(t)\in\mathcal{U}_i,
	\qquad \forall i\in\mathcal{M},\ \forall t\in\mathbb{N}.
	\label{eq:local_constraints}
\end{align}
where $\mathcal{X}_i\subseteq\mathbb{R}^{n_i}$ and
$\mathcal{U}_i\subseteq\mathbb{R}^{m_i}$. Agents must also satisfy coupled
collision-avoidance constraints, written as
\begin{align}
	c(p(t))\leq 0,
	\label{eq:collision_constraint}
\end{align}
where $p(t):=\mathrm{col}_{i\in\mathcal{M}}(p_i(t))$ and
$c:\mathbb{R}^{Mn_p}\to\mathbb{R}^{n_c}$ represents inter-agent and
agent-obstacle collision constraints.

\subsection{Communication}

We assume that the agents evolve and communicate synchronously. For a communication
radius $r_{\mathrm{com}}>0$, agents $i$ and $j$ are neighbors at time $t$ if
\begin{align}
	\|p_j(t)-p_i(t)\|_2 \leq r_{\mathrm{com}}.
\end{align}
This induces the time-varying undirected graph
\begin{align}
	\mathcal{G}(p(t))
	=
	\left(\mathcal{M},\mathcal{E}(p(t))\right),
\end{align}
with edge set
\begin{multline}
	\mathcal{E}(p(t))
	:=
	\{
	(i,j)\in\mathcal{M}\times\mathcal{M}\;:\;
	i\neq j, \\ \|p_j(t)-p_i(t)\|_2 \leq r_{\mathrm{com}}
	\}.
\end{multline}
Accordingly, the communication neighbors of agent $i$ are
\begin{align}
	\mathcal{N}^{\mathcal{G}}_i(t)
	:=
	\{j\in\mathcal{M}:\ (i,j)\in\mathcal{E}(p(t))\}.
\end{align}

To characterize swarm connectivity, let $L(p(t))$ denote the Laplacian of $\mathcal{G}(p(t))$. The graph is connected if and only if the second-smallest eigenvalue $\lambda_2(L(p(t)))$ of the Laplacian,  known as the Fiedler eigenvalue, is strictly positive~\cite{fiedler1973algebraic,bullo2018lectures,mohar1991laplacian}. Connectivity of the communication network is therefore enforced by
\begin{align}
	\lambda_2\!\left(L(p(t))\right) > 0,
	\qquad \forall t\in\N. \label{eq:connectivity_constraint}
\end{align}

\subsection{Centralized MPC problem}

The goal is to compute, in a distributed fashion, control inputs that optimize a
task-dependent objective while satisfying local constraints, collision
avoidance, and connectivity. Let $x(t):=\mathrm{col}_{i\in\mathcal{M}}(x_i(t))$,
$u(t):=\mathrm{col}_{i\in\mathcal{M}}(u_i(t))$, and
$p(t):=Cx(t)$, with $C:=\mathrm{diag}(C_1,\ldots,C_M)$. Given a horizon
$N\in\mathbb{N}$, a stage cost $\ell$, and a terminal cost $\ell_f$, the
centralized FHOCP at time $t$ is
\begin{subequations}\label{eq:centralized_fhocp}
	\begin{align}
		\min_{\substack{x_{0|t},\dots,x_{N|t}                                  \\
		u_{0|t},\dots,u_{N-1|t}}}\quad &
		\sum_{k=0}^{N-1}\ell(x_{k|t},u_{k|t})+\ell_f(x_{N|t})
		\label{eq:centralized_fhocp_cost}
		\\
		\mathrm{s.t.}\quad
		                               & x_{i,0|t}=x_i(t),
		\label{eq:centralized_fhocp_init}
		\\
		                               & x_{i,k+1|t}=f_i(x_{i,k|t},u_{i,k|t}),
		\label{eq:centralized_fhocp_dyn}
		\\
		                               & u_{i,k|t}\in\mathcal{U}_i,
		\label{eq:centralized_fhocp_input}
		\\
		                               & x_{i,k|t}\in\mathcal{X}_i,
		\label{eq:centralized_fhocp_state}
		\\
		                               & c(p_{k|t})\leq 0,
		\label{eq:centralized_fhocp_collision}
		\\
		                               & \lambda_2(L(p_{k|t}))>0,
		\label{eq:centralized_fhocp_connectivity}
	\end{align}
\end{subequations}
where $p_{k|t}:=Cx_{k|t}$. Constraints
\eqref{eq:centralized_fhocp_init}--\eqref{eq:centralized_fhocp_state} are
imposed for all $i\in\mathcal{M}$; dynamics and input constraints for
$k\in\mathbb{N}_0^{N-1}$; and state, collision-avoidance, and connectivity
constraints for $k\in\mathbb{N}_0^{N}$.

We assume that the initial configuration is connected, i.e.,
$\lambda_2(L(p(0)))>0$. Problem~\eqref{eq:centralized_fhocp} provides a
centralized MPC formulation of the connectivity-maintenance problem. However,
the collision-avoidance and connectivity constraints couple agents through
their relative positions, and therefore prevent a scalable neighbor-to-neighbor
implementation. The next section replaces these coupled constraints with local
contracts that can be computed and enforced in a distributed manner.

\section{Contract-based distributed MPC} \label{sec:contract_dmpc}

We replace the coupled collision-avoidance and connectivity constraints of
Problem~\eqref{eq:centralized_fhocp} with local contract constraints. A contract
is a set describing where an agent's predicted position is allowed to lie
at a given prediction step. By planning with respect to contracts, rather than
directly against other agents' optimized trajectories, each agent enforces the
coupled requirements through local constraints. This removes
optimization-level coupling between agents.

Validity of the contracts requires two properties: the predicted positions must
remain inside their assigned contracts, and this property must be preserved
under the standard shift of the predicted sequence. We use this contract paradigm for
both connectivity maintenance and collision avoidance.

At time $t$, each agent $i$ is assigned connectivity contracts
\begin{align}
	\mathcal{C}_{i,t}
	:=
	\{\mathcal{C}_{i,0|t},\ldots,\mathcal{C}_{i,N|t}\},
	\qquad
	\mathcal{C}_{i,k|t}\subseteq\mathbb{R}^{n_p},
\end{align}
where $\mathcal{C}_{i,k|t}$ constrains the predicted position of agent $i$ at
prediction step $k$. These contracts will be designed so that their satisfaction
preserves a connected spanning subgraph of the communication graph.

Collision avoidance is also enforced through local contracts
\begin{align}
	\mathcal{D}_{i,t}
	:=
	\{\mathcal{D}_{i,0|t},\ldots,\mathcal{D}_{i,N|t}\},
	\qquad
	\mathcal{D}_{i,k|t}\subseteq\mathbb{R}^{n_p}.
\end{align}
Following~\cite{bodmer2026anytime}, we assume that these contracts satisfy
\begin{align}
	C_i x_{i,k|t}\in \mathcal{D}_{i,k|t},
	\quad \forall i\in\mathcal{M}
	\quad\Longrightarrow\quad
	c(p_{k|t})\leq 0.
	\label{eq:collision_contract_implication}
\end{align}
We refer to~\cite{bodmer2026anytime} for their construction and recursive update
rule. The remainder of this section focuses on the construction of the
connectivity contracts.

\subsection{Connectivity contracts}

We now construct the connectivity contracts replacing the global constraint
\eqref{eq:centralized_fhocp_connectivity}. Instead of imposing the nonconvex
Fiedler-eigenvalue condition directly, we preserve a connected set of
communication links along the prediction horizon. At time $t$, let
\begin{align}
	\mathcal{T}_t
	=
	\left(\mathcal{M},\mathcal{E}_{\mathcal{T}}(t)\right)
\end{align}
be a connected spanning subgraph whose edges are the communication links to be
maintained by the contracts. In the simplest case, $\mathcal{T}_t$ is a spanning tree, so that
only the minimum number of links needed for connectivity is preserved.
The graph $\mathcal{T}_t$ may vary over time. It can be kept fixed or updated at each time step, as long as it remains a connected spanning subgraph of the communication graph; the update is discussed in Section~\ref{subsec:distributed_contr}. We denote by $\mathcal{N}^{\mathcal{T}}_i(t)$ the set of neighbors of agent $i$ in $\mathcal{T}_t$.

The contracts are constructed around a proposed trajectory. For each agent
$i\in\mathcal{M}$, let
\[
	\tilde p_{i,0:N|t}
	:=
	\{\tilde p_{i,0|t},\ldots,\tilde p_{i,N|t}\},
	\qquad
	\tilde p_{i,0|t}=p_i(t),
\]
denote the proposed position trajectory at time $t$. The following property
collects the requirements that this trajectory must satisfy in order to be used
for contract construction.

\begin{definition}[Compatible proposed trajectory]
	\label{def:compatible_proposed_trajectory}
	At time $t$, the proposed position trajectories
	$\{\tilde p_{i,0:N|t}\}_{i\in\mathcal{M}}$ are compatible with the local
	constraints, collision contracts, and contract graph $\mathcal{T}_t$ if, for
	each agent $i\in\mathcal{M}$, there exist trajectories
	$\{\tilde x_{i,k|t}\}_{k=0}^{N}$ and
	$\{\tilde u_{i,k|t}\}_{k=0}^{N-1}$ such that
	\begin{align}
		\tilde x_{i,0|t}
		 & =
		x_i(t),
		\label{eq:proposed_initial_condition}
		\\
		\tilde x_{i,k+1|t}
		 & =
		f_i(\tilde x_{i,k|t},\tilde u_{i,k|t}),\
		\tilde u_{i,k|t}\in\mathcal{U}_i,
		 &     & k\in\mathbb{N}_0^{N-1},
		\label{eq:proposed_dynamics_inputs}
		\\
		\tilde x_{i,k|t}
		 & \in
		\mathcal{X}_i,\quad
		\tilde p_{i,k|t}
		=
		C_i\tilde x_{i,k|t},
		 &     & k\in\mathbb{N}_0^{N}.
		\label{eq:proposed_state_constraint_position}
	\end{align}
	Moreover,
	\begin{align}
		 & \tilde p_{i,k|t}
		\in
		\mathcal{D}_{i,k|t},
		 &                                         & \forall i\in\mathcal{M},\ k\in\mathbb{N}_0^N,
		\label{eq:proposed_inside_collision_contracts}
		\\
		 & \|\tilde p_{i,k|t}-\tilde p_{j,k|t}\|_2
		\leq
		r_{\mathrm{com}},
		 &                                         & \forall (i,j)\in\mathcal{E}_{\mathcal{T}}(t),\
		k\in\mathbb{N}_0^N.
		\label{eq:proposed_trajectory_compatible_tree}
	\end{align}
\end{definition}

Definition~\ref{def:compatible_proposed_trajectory} is only a requirement for
constructing nonempty contracts at a given time $t$. In the closed-loop scheme,
compatible proposed trajectories are generated recursively by the standard MPC
shifting arguments, as shown in Theorem~\ref{thm:recursive_feasibility}. Hence the condition
reduces to the usual initial-feasibility requirement.

We now define
connectivity contracts around this trajectory so that any admissible plan
satisfying them preserves the same links.

\begin{definition}[Connectivity contracts]
	\label{def:connectivity_contracts}
	A family of sets
	\[
		\left\{\mathcal{C}_{i,k|t}\right\}_{i\in\mathcal{M},\,k=0}^{N},
		\qquad
		\mathcal{C}_{i,k|t}\subseteq\mathbb{R}^{n_p},
	\]
	is a valid family of connectivity contracts with respect to $\mathcal{T}_t$ if:

	\begin{enumerate}
		\item the proposed positions are contained in the contracts,
		      \begin{align}
			      \tilde p_{i,k|t}\in\mathcal{C}_{i,k|t},
			      \qquad
			      \forall i\in\mathcal{M},\quad k\in\mathbb{N}_0^N;
			      \label{eq:proposed_positions_inside_connectivity_contracts}
		      \end{align}

		\item for every edge $(i,j)\in\mathcal{E}_{\mathcal{T}}(t)$ and every
		      prediction step $k\in\mathbb{N}_0^N$,
		      \begin{align}
			      \|p_i-p_j\|_2 \leq r_{\mathrm{com}},
			      \quad
			      \forall p_i\in\mathcal{C}_{i,k|t},\
			      \forall p_j\in\mathcal{C}_{j,k|t}.
			      \label{eq:pairwise_connectivity_contract_condition}
		      \end{align}
		      Equivalently,
		      $\mathcal{C}_{i,k|t}\oplus(-\mathcal{C}_{j,k|t})
			      \subseteq \Ball{r_{\mathrm{com}}}{0}$ where $\Ball{r}{c}:=\{p\in\mathbb{R}^{n_p}\mid \|p-c\|_2\le r\}$.
	\end{enumerate}
\end{definition}

The first condition anchors the contracts around the proposed trajectory,
whereas the second condition guarantees that every maintained edge remains
inside the communication range for all admissible positions in the contracts.
We next give a concrete construction satisfying these conditions.

The construction is inspired by classical limited-range connectivity constraints
for mobile agents~\cite{ando1999distributed,notarstefano2006maintaining}, where
a pairwise communication link is preserved by constraining the two agents to
remain inside a ball centered at their midpoint. Here, we adapt this idea to the
finite-horizon setting by constructing such balls around the proposed positions
at each prediction step and only for the edges of the contract graph
$\mathcal{T}_t$. For every edge $(i,j)\in\mathcal{E}_{\mathcal{T}}(t)$ and prediction step
$k\in\mathbb{N}_0^N$, define
\begin{align}
	\mathcal{B}_{ij,k|t}
	:=
	\Ball{r_{\mathrm{com}}/2}{
		\frac{\tilde p_{i,k|t}+\tilde p_{j,k|t}}{2}
	}.
	\label{eq:pairwise_ball_contract}
\end{align}
By~\eqref{eq:proposed_trajectory_compatible_tree}, both proposed positions
$\tilde p_{i,k|t}$ and $\tilde p_{j,k|t}$ belong to
$\mathcal{B}_{ij,k|t}$.
Let
$\mathcal{N}^{\mathcal{T}}_i(t)$ denote the neighbors of agent $i$ in the
contract graph $\mathcal{T}_t$. Agent $i$ then defines
\begin{align}
	\mathcal{C}_{i,k|t}
	:=
	\bigcap_{j\in\mathcal{N}^{\mathcal{T}}_i(t)}
	\mathcal{B}_{ij,k|t},
	\qquad
	k\in\mathbb{N}_0^N.
	\label{eq:connectivity_contract_intersection}
\end{align}
Since each ball $\mathcal{B}_{ij,k|t}$ is convex, every contract
$\mathcal{C}_{i,k|t}$ is convex as an intersection of convex sets.
Figure~\ref{fig:contracts_construction} illustrates this construction for an agent with three tree neighbors.

\begin{figure}[t]
	\centering
	\begin{tikzpicture}[
			scale=0.85,
			every node/.style={font=\footnotesize},
			agent/.style={circle,fill=black,inner sep=1.6pt},
			ball/.style={draw=blue!55!black,thick,fill=blue!10,fill opacity=0.55},
			tree/.style={very thick,gray!60},
			leader/.style={-,thin,gray!55!black,shorten >=1pt},
		]
		\coordinate (Pi) at ( 0.7, 1.3);
		\coordinate (Pj) at (-1.8, 0.7);
		\coordinate (Pk) at ( 2.4,-0.3);
		\coordinate (Pl) at (-0.3,-1.5);

		\coordinate (Mij) at ($(Pi)!0.5!(Pj)$);
		\coordinate (Mik) at ($(Pi)!0.5!(Pk)$);
		\coordinate (Mil) at ($(Pi)!0.5!(Pl)$);

		\draw[ball] (Mij) circle (1.5);
		\draw[ball] (Mik) circle (1.5);
		\draw[ball] (Mil) circle (1.5);

		\begin{scope}
			\clip (Mij) circle (1.5);
			\clip (Mik) circle (1.5);
			\clip (Mil) circle (1.5);
			\fill[blue!55!black,opacity=0.3] (-5,-5) rectangle (5,5);
		\end{scope}

		\draw[tree] (Pi) -- (Pj);
		\draw[tree] (Pi) -- (Pk);
		\draw[tree] (Pi) -- (Pl);

		\foreach \p in {Pi,Pj,Pk,Pl}{\node[agent] at (\p) {};}

		\node[anchor=south west,inner sep=2pt] at (Pi) {$\tilde p_i$};
		\node[anchor=north east,inner sep=2pt] at (Pj) {$\tilde p_j$};
		\node[anchor=west,inner sep=2pt]       at (Pk) {$\tilde p_k$};
		\node[anchor=north west,inner sep=2pt] at (Pl) {$\tilde p_\ell$};

		\node[blue!55!black] (Lbij) at (-2.6, 2.4) {$\mathcal{B}_{ij}$};
		\draw[leader,blue!55!black] (Lbij.south east) to[bend left=10] ($(Mij)+(-0.8,1.0)$);

		\node[blue!55!black] (Lbik) at ( 3.7, 2.3) {$\mathcal{B}_{ik}$};
		\draw[leader,blue!55!black] (Lbik.south west) to[bend right=12] ($(Mik)+(1.05,0.85)$);

		\node[blue!55!black] (Lbil) at ( 2.0,-2.1) {$\mathcal{B}_{i\ell}$};
		\draw[leader,blue!55!black] (Lbil.north west) to[bend right=10] ($(Mil)+(0.95,-0.95)$);

		\node[blue!55!black] (Lci) at (-3.0,-2.0) {$\mathcal{C}_{i}=\bigcap_{n\in\mathcal{N}^{\mathcal{T}}_i}\mathcal{B}_{in}$};
		\draw[leader,blue!55!black] (Lci.north east) to[bend right=20] ($(Pi)+(-0.45,-0.5)$);
	\end{tikzpicture}
	\caption{Construction of the connectivity contracts. For each maintained
		edge of the spanning subgraph $\mathcal{T}_t$ (gray), a pairwise ball
		$\mathcal{B}_{ij}$ of radius $r_{\mathrm{com}}/2$ is centered at the
		midpoint of the proposed positions $\tilde p_i,\tilde p_j$. Agent $i$'s
		contract $\mathcal{C}_{i}$ (darker region) is the intersection of the
		balls of all its incident edges, which guarantees
		$\|p_i-p_j\|\le r_{\mathrm{com}}$ for every admissible predicted position.}
	\label{fig:contracts_construction}
\end{figure}

\begin{proposition}[Ball contracts preserve connectivity]
	\label{prop:ball_contracts_preserve_connectivity}
	Let the proposed trajectories be compatible in the sense of
	Definition~\ref{def:compatible_proposed_trajectory}, and let
	$\mathcal{T}_t$ be connected. Then the sets
	$\{\mathcal{C}_{i,k|t}\}_{i\in\mathcal{M},\,k=0}^{N}$ defined by
	\eqref{eq:connectivity_contract_intersection} are valid connectivity contracts
	in the sense of Definition~\ref{def:connectivity_contracts}. Moreover, if
	\begin{align}
		C_i x_{i,k|t}\in\mathcal{C}_{i,k|t},
		\qquad
		\forall i\in\mathcal{M},\quad k\in\mathbb{N}_0^N,
	\end{align}
	then
	\begin{align}
		\lambda_2\!\left(L(p_{k|t})\right)>0,
		\qquad
		k\in\mathbb{N}_0^N.
	\end{align}
\end{proposition}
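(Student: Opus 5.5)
The plan is to verify the two conditions of Definition~\ref{def:connectivity_contracts} directly from the ball construction \eqref{eq:pairwise_ball_contract}--\eqref{eq:connectivity_contract_intersection}, and then read off connectivity of the predicted communication graph from the fact that it contains $\mathcal{T}_t$ as a spanning subgraph. Throughout, the case of an agent with no neighbors in $\mathcal{T}_t$ is excluded: since $\mathcal{T}_t$ is connected and spanning, every agent has $\mathcal{N}^{\mathcal{T}}_i(t)\neq\emptyset$ whenever $M\ge 2$. This guarantees that every intersection in \eqref{eq:connectivity_contract_intersection} is over a nonempty index set, so each $\mathcal{C}_{i,k|t}$ is a bounded convex set rather than all of $\mathbb{R}^{n_p}$. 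The case $M=1$ is trivial.

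For condition 1, I fix $i$, $k$, and $j\in\mathcal{N}^{\mathcal{T}}_i(t)$, and write $m=(\tilde p_{i,k|t}+\tilde p_{j,k|t})/2$. Then
\[
\|\tilde p_{i,k|t}-m\|_2=\tfrac12\|\tilde p_{i,k|t}-\tilde p_{j,k|t}\|_2\le r_{\mathrm{com}}/2
\]
by \eqref{eq:proposed_trajectory_compatible_tree}, so $\tilde p_{i,k|t}\in\mathcal{B}_{ij,k|t}$. Since this holds for every incident edge, $\tilde p_{i,k|t}$ lies in the intersection, which gives \eqref{eq:proposed_positions_inside_connectivity_contracts}. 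The same computation shows that the contracts are nonempty.

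For condition 2, I take an edge $(i,j)\in\mathcal{E}_{\mathcal{T}}(t)$. Because $\mathcal{T}_t$ is undirected, $j\in\mathcal{N}^{\mathcal{T}}_i(t)$ and $i\in\mathcal{N}^{\mathcal{T}}_j(t)$. Hence $\mathcal{C}_{i,k|t}\subseteq\mathcal{B}_{ij,k|t}$ and $\mathcal{C}_{j,k|t}\subseteq\mathcal{B}_{ji,k|t}$. The key observation is that $\mathcal{B}_{ji,k|t}=\mathcal{B}_{ij,k|t}$, since the midpoint is symmetric in $i$ and $j$. Both points therefore lie in a common ball of radius $r_{\mathrm{com}}/2$ with center $m$, and the triangle inequality gives
\[
\|p_i-p_j\|_2\le\|p_i-m\|_2+\|m-p_j\|_2\le r_{\mathrm{com}},
\]
which is \eqref{eq:pairwise_connectivity_contract_condition}.

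For the \emph{moreover} part, I assume $C_ix_{i,k|t}\in\mathcal{C}_{i,k|t}$ for all $i$. Condition 2 then gives $\|p_{i,k|t}-p_{j,k|t}\|_2\le r_{\mathrm{com}}$ for every $(i,j)\in\mathcal{E}_{\mathcal{T}}(t)$. It follows that $\mathcal{E}_{\mathcal{T}}(t)\subseteq\mathcal{E}(p_{k|t})$, so $\mathcal{T}_t$ is a spanning subgraph of $\mathcal{G}(p_{k|t})$. A graph containing a connected spanning subgraph is connected, because any path in $\mathcal{T}_t$ between two vertices is also a path in the larger graph. By the Fiedler characterization cited in Section~\ref{sec:problem}, this yields $\lambda_2(L(p_{k|t}))>0$. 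I do not expect a real obstacle here. The only points that need care are the symmetry of the shared ball $\mathcal{B}_{ij}=\mathcal{B}_{ji}$ and the nonemptiness of the neighbor sets.
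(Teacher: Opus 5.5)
Your proposal is correct and follows the paper's proof essentially step by step: you show $\tilde p_{i,k|t}$ lies in every incident ball, you place $\mathcal{C}_{i,k|t}$ and $\mathcal{C}_{j,k|t}$ in the common ball $\mathcal{B}_{ij,k|t}$ of diameter $r_{\mathrm{com}}$, and you conclude from $\mathcal{E}_{\mathcal{T}}(t)\subseteq\mathcal{E}(p_{k|t})$ that the predicted graph is connected, so $\lambda_2>0$. Your symmetry remark $\mathcal{B}_{ij,k|t}=\mathcal{B}_{ji,k|t}$ makes explicit the paper's unstated use of the ``same ball''; the nonempty-neighbor-set remark is harmless but not needed for validity of the contracts.
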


\begin{proof}
	By Definition~\ref{def:compatible_proposed_trajectory}, for every
	$j\in\mathcal{N}^{\mathcal{T}}_i(t)$, condition
	\eqref{eq:proposed_trajectory_compatible_tree} implies
	$\tilde p_{i,k|t}\in\mathcal{B}_{ij,k|t}$. Therefore
	$\tilde p_{i,k|t}\in\mathcal{C}_{i,k|t}$, and the first condition of
	Definition~\ref{def:connectivity_contracts} holds.

	Moreover, for every maintained edge $(i,j)$ and every
	$k\in\mathbb{N}_0^N$, both $\mathcal{C}_{i,k|t}$ and
	$\mathcal{C}_{j,k|t}$ are subsets of the same ball
	$\mathcal{B}_{ij,k|t}$, whose diameter is $r_{\mathrm{com}}$.
	Therefore, $\|p_i-p_j\|_2\leq r_{\mathrm{com}}$ for all
	$p_i\in\mathcal{C}_{i,k|t}$ and
	$p_j\in\mathcal{C}_{j,k|t}$, which proves
	\eqref{eq:pairwise_connectivity_contract_condition}. Thus the constructed sets
	are valid connectivity contracts.

	If $C_i x_{i,k|t}\in\mathcal{C}_{i,k|t}$ for all agents and prediction steps,
	then every edge of the connected graph $\mathcal{T}_t$ is present in the
	communication graph induced by $p_{k|t}$. Hence this graph is connected, or
	equivalently $\lambda_2(L(p_{k|t}))>0$.
\end{proof}

In implementation, the sets in
\eqref{eq:connectivity_contract_intersection} can be replaced by polytopic inner
approximations containing the proposed positions. This yields linear
position-contract constraints while preserving the connectivity guarantee.

\subsection{Local FHOCP}\label{sec:local_fhocp}

We now define the local optimization problem solved by each agent once the
contracts have been computed. We assume that the stage and terminal cost are additively
decomposable across agents, i.e.,
\begin{align}
	\ell(x,u)=\sum_{i\in\mathcal{M}}\ell_i(x_i,u_i), \quad  \ell_f(x)=\sum_{i\in\mathcal{M}}\ell_{f,i}(x_i),
\end{align}

At time $t$, agent $i$ solves
\begin{subequations}\label{eq:local_contract_fhocp}
	\begin{align}
		\min_{\substack{x_{i,0|t},\ldots,x_{i,N|t}           \\
		u_{i,0|t},\ldots,u_{i,N-1|t}                         \\
		\bar x_{i|t},\bar u_{i|t}}}
		\quad &
		\sum_{k=0}^{N-1}\ell_i(x_{i,k|t},u_{i,k|t})+\ell_{f,i}(\bar{x}_{i|t})
		\label{eq:local_contract_fhocp_cost}
		\\
		\mathrm{s.t.}\quad
		      & x_{i,0|t}=x_i(t),
		\label{eq:local_contract_fhocp_init}
		\\
		      & x_{i,k+1|t}=f_i(x_{i,k|t},u_{i,k|t}),
		\label{eq:local_contract_fhocp_dyn}
		\\
		      & x_{i,k|t}\in\mathcal{X}_i,
		\label{eq:local_contract_fhocp_state}
		\\
		      & u_{i,k|t}\in\mathcal{U}_i,
		\label{eq:local_contract_fhocp_input}
		\\
		      & C_i x_{i,k|t}\in\mathcal{D}_{i,k|t},
		\label{eq:local_contract_fhocp_collision_contract}
		\\
		      & C_i x_{i,k|t}\in\mathcal{C}_{i,k|t},
		\label{eq:local_contract_fhocp_connectivity_contract}
		\\
		      & x_{i,N|t}=\bar x_{i|t},
		\label{eq:local_contract_fhocp_terminal_eq}
		\\
		      & \bar x_{i|t}=f_i(\bar x_{i|t},\bar u_{i|t}),
		\label{eq:local_contract_fhocp_terminal_ss}
		\\
		      & \bar x_{i|t}\in\mathcal{X}_i,\quad
		\bar u_{i|t}\in\mathcal{U}_i.
		\label{eq:local_contract_fhocp_terminal_admissible}
	\end{align}
\end{subequations}
The dynamics~\eqref{eq:local_contract_fhocp_dyn} and input
constraints~\eqref{eq:local_contract_fhocp_input} are imposed for all
$k\in\mathbb{N}_0^{N-1}$, while the state constraints
\eqref{eq:local_contract_fhocp_state}, collision contracts
\eqref{eq:local_contract_fhocp_collision_contract}, and connectivity contracts
\eqref{eq:local_contract_fhocp_connectivity_contract} are imposed for all
$k\in\mathbb{N}_0^{N}$.

The last three constraints impose a terminal equality to an admissible steady
state selected by the optimizer. By Assumption~\ref{ass:posinv}, the terminal
equilibrium can be translated in position without changing the required
steady-state input. Hence, the optimizer can choose a terminal position
compatible with the terminal collision and connectivity contracts~\eqref{eq:local_contract_fhocp_collision_contract}--\eqref{eq:local_contract_fhocp_connectivity_contract}.

This terminal equality yields a simple recursive-feasibility argument: once the
terminal state is reached, applying the corresponding terminal input keeps the
agent at the same admissible state. More general terminal ingredients could be
used, but they would have to remain compatible with the time-varying contracts
and their recursive update.

With these ingredients, problem~\eqref{eq:local_contract_fhocp} is fully local:
the coupled collision-avoidance and connectivity constraints of the centralized
problem are replaced by the set-membership constraints
\eqref{eq:local_contract_fhocp_collision_contract} and
\eqref{eq:local_contract_fhocp_connectivity_contract}. Therefore, once the
contracts have been computed, no optimization variable of another agent appears
in~\eqref{eq:local_contract_fhocp}, and each agent can solve its FHOCP
independently.


\subsection{Theoretical guarantees}\label{subsec:guarantees}

We now state the recursive-feasibility and constraint-satisfaction result for
the proposed contract-based MPC scheme.
\begin{theorem}
	\label{thm:recursive_feasibility}
	Suppose that the local FHOCPs~\eqref{eq:local_contract_fhocp} are feasible at
	$t=0$, and that the initial proposed trajectories are compatible in the sense of
	Definition~\ref{def:compatible_proposed_trajectory}. At each time $t$, assume
	that the collision avoidance contracts satisfy
	\eqref{eq:collision_contract_implication} and are recursively updated so as to
	contain the shifted proposed positions, as in~\cite{bodmer2026anytime}.
	Moreover, let the connectivity contracts be constructed according to
	\eqref{eq:pairwise_ball_contract}--\eqref{eq:connectivity_contract_intersection}.

	Then the local FHOCPs~\eqref{eq:local_contract_fhocp} are recursively feasible.
	Moreover, the closed-loop trajectories satisfy the local state and input
	constraints~\eqref{eq:local_constraints}, the collision-avoidance
	constraint~\eqref{eq:collision_constraint}, and the connectivity
	constraint~\eqref{eq:connectivity_constraint} for all $t\in\mathbb{N}$.
	Lastly there always exists a connected subgraph $\mathcal{T}_t$ for all $t\in\mathbb{N}$.
\end{theorem}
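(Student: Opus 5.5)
The proof goes by induction on $t$, with the standard shifted-candidate argument. The inductive hypothesis at time $t$ has three parts: each local FHOCP~\eqref{eq:local_contract_fhocp} is feasible; the proposed trajectories are compatible with respect to a connected $\mathcal{T}_t$ in the sense of Definition~\ref{def:compatible_proposed_trajectory}; and the contracts are built by~\eqref{eq:pairwise_ball_contract}--\eqref{eq:connectivity_contract_intersection}. The base case $t=0$ is exactly the hypothesis of the theorem.

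\emph{Closed-loop constraints at time $t$.} Let $x^\star_{i,\cdot|t},u^\star_{i,\cdot|t},\bar x^\star_{i|t},\bar u^\star_{i|t}$ be the optimal solutions and apply $u_i(t)=u^\star_{i,0|t}$. Then $x_i(t)=x^\star_{i,0|t}\in\mathcal{X}_i$ and $u_i(t)\in\mathcal{U}_i$. Since $C_ix^\star_{i,0|t}\in\mathcal{D}_{i,0|t}$ for all $i$, \eqref{eq:collision_contract_implication} gives $c(p(t))\le0$. Since $C_ix^\star_{i,0|t}\in\mathcal{C}_{i,0|t}$ for all $i$, Proposition~\ref{prop:ball_contracts_preserve_connectivity} with $k=0$ gives $\lambda_2(L(p(t)))>0$. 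It also shows that every edge of $\mathcal{T}_t$ is present in $\mathcal{G}(p(t))$.

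\emph{New proposal at $t+1$.} Take the shifted optimal trajectory: $\tilde x_{i,k|t+1}:=x^\star_{i,k+1|t}$ for $k\in\mathbb{N}_0^{N-1}$ and $\tilde x_{i,N|t+1}:=\bar x^\star_{i|t}$. The inputs are $\tilde u_{i,k|t+1}:=u^\star_{i,k+1|t}$ for $k\le N-2$ and $\tilde u_{i,N-1|t+1}:=\bar u^\star_{i|t}$. Choose the contract graph $\mathcal{T}_{t+1}:=\mathcal{T}_t$.

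Compatibility then follows item by item.
\begin{itemize}
\item \eqref{eq:proposed_initial_condition} holds because $x_i(t+1)=f_i(x_i(t),u^\star_{i,0|t})=x^\star_{i,1|t}$ (nominal model).
\item \eqref{eq:proposed_dynamics_inputs}--\eqref{eq:proposed_state_constraint_position} hold by the optimal-solution constraints together with the terminal steady-state condition~\eqref{eq:local_contract_fhocp_terminal_ss}--\eqref{eq:local_contract_fhocp_terminal_admissible}.
\item \eqref{eq:proposed_inside_collision_contracts} holds because the collision contracts are updated to contain the shifted proposed positions, as assumed in the theorem.
\item For \eqref{eq:proposed_trajectory_compatible_tree}, take an edge $(i,j)\in\mathcal{E}_{\mathcal{T}}(t)$. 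Positions $C_ix^\star_{i,k+1|t}\in\mathcal{C}_{i,k+1|t}$ and $C_jx^\star_{j,k+1|t}\in\mathcal{C}_{j,k+1|t}$ are within $r_{\mathrm{com}}$ by~\eqref{eq:pairwise_connectivity_contract_condition}. The terminal step uses $\bar x^\star_{i|t}=x^\star_{i,N|t}$, so the last proposed position equals step $N$ of time $t$, which is also covered.
\end{itemize}
Hence $\mathcal{T}_{t+1}$ is a connected spanning subgraph of $\mathcal{G}(p(t+1))$: its edges are maintained at $k=0$ of the new proposal. This already gives the final claim that a connected $\mathcal{T}_t$ exists for all $t$. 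Any other update rule that keeps a connected spanning subgraph whose edges satisfy~\eqref{eq:proposed_trajectory_compatible_tree} for the shifted proposal would work equally well.

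\emph{Feasibility at $t+1$.} Build the new contracts from this proposal. Proposition~\ref{prop:ball_contracts_preserve_connectivity} gives $\tilde p_{i,k|t+1}\in\mathcal{C}_{i,k|t+1}$. The shifted sequence is then a feasible point of~\eqref{eq:local_contract_fhocp} at $t+1$, with terminal pair $\bar x_{i|t+1}=\bar x^\star_{i|t}$ and $\bar u_{i|t+1}=\bar u^\star_{i|t}$, and $x_{i,N|t+1}=\bar x^\star_{i|t}$ as required. This closes the induction.

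The main obstacle is the terminal step $k=N$ at $t+1$. It depends on the terminal equality making the appended state equal the previous step-$N$ state, so that the previous contracts at step $N$ certify the pairwise distance and the collision contract update covers it. Position invariance (Assumption~\ref{ass:posinv}) is not needed for the shift itself; it only justifies that feasible terminal steady states exist at arbitrary positions. A secondary subtlety is that the collision contract update from~\cite{bodmer2026anytime} is taken as a black box, and it must be applied to the same shifted proposal that is used for the connectivity contracts.
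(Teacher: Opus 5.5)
Your proposal is correct and follows essentially the same route as the paper. Both use the standard shifted candidate with the terminal steady state appended, and both retain $\mathcal{T}_{t+1}=\mathcal{T}_t$ by invoking the pairwise bound~\eqref{eq:pairwise_connectivity_contract_condition} on the previous solution, with the $k=N$ step covered by the repeated terminal state; Proposition~\ref{prop:ball_contracts_preserve_connectivity} then rebuilds the contracts around the shifted proposal, and your three-part inductive hypothesis and the explicit $k=0$ derivation of the closed-loop constraints are only slightly more careful bookkeeping of the same argument.
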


\begin{proof}
	Assume that the local FHOCPs are feasible at time $t$, and let
	$\{x^\star_{i,k|t}\}_{k=0}^{N}$,
	$\{u^\star_{i,k|t}\}_{k=0}^{N-1}$, and
	$(\bar x^\star_{i|t},\bar u^\star_{i|t})$ denote the optimal solution of
	\eqref{eq:local_contract_fhocp}. We construct the candidate proposed trajectory
	at time $t+1$ by the standard MPC shift:
	\begin{align}
		\tilde x_{i,k|t+1}
		 & :=
		x^\star_{i,k+1|t},
		 &    & k\in\mathbb{N}_0^{N-1},
		\label{eq:shifted_state_1}
		\\
		\tilde x_{i,N|t+1}
		 & :=
		\bar x^\star_{i|t}
		=
		x^\star_{i,N|t},
		\label{eq:shifted_state_2}
		\\
		\tilde u_{i,k|t+1}
		 & :=
		u^\star_{i,k+1|t},
		 &    & k\in\mathbb{N}_0^{N-2},
		\label{eq:shifted_input_1}
		\\
		\tilde u_{i,N-1|t+1}
		 & :=
		\bar u^\star_{i|t}.
		\label{eq:shifted_input_2}
	\end{align}
	After applying $u_i(t)=u^\star_{i,0|t}$, the shifted candidate satisfies
	$\tilde x_{i,0|t+1}=x^\star_{i,1|t}=x_i(t+1)$. The dynamics, state constraints,
	and input constraints follow from feasibility at time $t$ and from the terminal
	steady-state and admissibility constraints
	\eqref{eq:local_contract_fhocp_terminal_eq}--\eqref{eq:local_contract_fhocp_terminal_admissible}.

	By the recursive collision-contract construction of~\cite{bodmer2026anytime},
	the contracts at time $t+1$ can be chosen so that the shifted proposed
	positions satisfy
	\[
		C_i\tilde x_{i,k|t+1}\in\mathcal{D}_{i,k|t+1},
		\qquad
		\forall i\in\mathcal{M},\quad k\in\mathbb{N}_0^N.
	\]
	Thus the shifted trajectory satisfies the collision-contract part of
	Definition~\ref{def:compatible_proposed_trajectory}.

	It remains to verify that a connected subgraph $\mathcal{T}_{t+1}$ exists and that the shifted trajectory is compatible with it.
	Since the solution at time $t$ satisfies the connectivity contracts, and these contracts
	are valid by Proposition~\ref{prop:ball_contracts_preserve_connectivity}, every
	maintained edge satisfies
	\[
		\|p^\star_{i,k|t}-p^\star_{j,k|t}\|_2
		\leq r_{\mathrm{com}},
		\qquad
		\forall (i,j)\in\mathcal{E}_{\mathcal{T}}(t),\quad
		k\in\mathbb{N}_0^N,
	\]
	where $p^\star_{i,k|t}:=C_i x^\star_{i,k|t}$. Therefore, the shifted positions
	satisfy the same bound at time $t+1$ for all
	$k\in\mathbb{N}_0^N$; for $k=N$, this follows from the repeated terminal
	steady state. Hence every edge of $\mathcal{T}_t$ is preserved under the shift, so we can always set $\mathcal{T}_{t+1}=\mathcal{T}_t$, showing the existence of a connected subgraph. Compatibility of the shifted trajectory with $\mathcal{T}_{t+1}$ simply follows from compatibility of the shifted solution with $\mathcal{T}_t$.


	The ball construction
	\eqref{eq:pairwise_ball_contract}--\eqref{eq:connectivity_contract_intersection}
	therefore provides valid connectivity contracts containing the shifted proposed
	trajectory. Consequently, the shifted candidate is feasible for the local
	FHOCPs at time $t+1$. Recursive feasibility follows by induction from
	feasibility at $t=0$.

	Finally, since the applied input is always the first input of a feasible
	solution, the local state and input constraints hold in closed loop. Collision
	avoidance follows from~\eqref{eq:collision_contract_implication}, and
	connectivity follows from Proposition~\ref{prop:ball_contracts_preserve_connectivity}.
\end{proof}

\subsection{Distributed contract computation} \label{subsec:distributed_contr}

The contract construction requires only one-hop communication along the
maintained contract graph. Each agent $i$ stores its incident tree neighbors
$\mathcal{N}^{\mathcal{T}}_i(t)$ and, at the beginning of each MPC step,
exchanges the proposed position trajectory
\[
	\tilde p_{i,0:N|t}
	:=
	\{\tilde p_{i,0|t},\ldots,\tilde p_{i,N|t}\}
\]
with all $j\in\mathcal{N}^{\mathcal{T}}_i(t)$. For each maintained edge
$(i,j)\in\mathcal{E}_{\mathcal{T}}(t)$, the two agents construct the same
pairwise ball~\eqref{eq:pairwise_ball_contract}. Agent $i$ then obtains its
connectivity contracts by intersecting the balls associated with its incident
edges, as in~\eqref{eq:connectivity_contract_intersection}. Thus, once
$\mathcal{T}_t$ is given, no multi-hop consensus is required to compute the
contracts.

For recursive feasibility, the contract graph can be initialized as any
connected spanning subgraph of the initial communication graph and then retained
recursively, $\mathcal{T}_{t+1}=\mathcal{T}_t$. Optional graph updates may reduce
conservatism. To preserve the recursive-feasibility argument, any updated graph
must be chosen as a connected spanning subgraph whose edges are compatible with
the proposed trajectories. In the numerical experiments of Section~\ref{sec:experiments}, this update is
implemented using a distributed spanning-tree construction
\cite{gallager1983distributed,pandurangan2017time}.


The implementation is synchronous and executed at discrete control instants:
within each control period, agents exchange the trajectory messages, construct
the contracts, solve their local FHOCPs, and apply the first input.
In experiments, the communication
radius used in the contract construction can be tightened from
$r_{\mathrm{com}}$ to $r_{\mathrm{com}}-\delta$, where $\delta>0$ acts as a
robustness buffer for bounded communication delays, state-estimation errors,
and model mismatch. Larger delays that prevent the required trajectory messages
from being available within the control period would instead require an
explicit asynchronous or delay-robust extension, which is outside the scope of
the present formulation.

The resulting closed-loop procedure is summarized in
Algorithm~\ref{alg:contract_dmpc}.

\begin{algorithm}[t]
	\caption{Contract-based distributed MPC executed by agent $i$}
	\label{alg:contract_dmpc}
	\begin{algorithmic}[1]
		\State Initialize a feasible proposed position trajectory $\tilde p_{i,0:N|0}$ and a connected graph $\mathcal{T}_0$.
		\For{$t=0,1,2,\ldots$}
		\State Exchange $\tilde p_{i,0:N|t}$ with all
		$j\in\mathcal{N}^{\mathcal{T}}_i(t)$.
		\State Compute $\mathcal{C}_{i,k|t}$ using
		\eqref{eq:pairwise_ball_contract}--\eqref{eq:connectivity_contract_intersection}.
		\State Compute collision contracts $\mathcal{D}_{i,k|t}$ according to~\cite{bodmer2026anytime}.
		\State Solve the local FHOCP~\eqref{eq:local_contract_fhocp}.
		\State Apply $u_i(t)=u^\star_{i,0|t}$.
		\State Set $\mathcal{T}_{t+1}=\mathcal{T}_t$ and build
		$\tilde p_{i,0:N|t+1}$ from the shifted state trajectory \eqref{eq:shifted_state_1}--\eqref{eq:shifted_state_2}.
		\EndFor
	\end{algorithmic}
\end{algorithm}

\section{Experimental validation}
\label{sec:experiments}

We validate the proposed contract-based DMPC scheme in simulation and on miniature autonomous vehicles. The experiments demonstrate that the proposed connectivity contracts maintain network connectivity while the agents track reference positions in cluttered environments and satisfy collision-avoidance constraints.

\subsection{Experimental setup}

The implementation is based on a modular ROS-based software framework for miniature car-like robots~\cite{carron2023chronos} and deployed on 1/28-scale vehicles. Each vehicle is modeled by a discrete-time kinematic bicycle model with state and input
\begin{equation*}
	x_i=[p_{i,x},p_{i,y},\psi_i,v_i]^\top,\quad
	u_i=[\delta_i,a_i]^\top,
\end{equation*}
where $(p_{i,x},p_{i,y})$ is the position, $\psi_i$ the heading angle, $v_i$ the velocity, $\delta_i$ the steering angle, and $a_i$ the longitudinal acceleration. In simulation, the model is integrated using a fourth-order Runge--Kutta scheme. In hardware, poses are obtained from a motion-capture system\footnote[1]{\url{https://www.qualisys.com/cameras/arqus/}} and processed through the state-estimation pipeline of the same framework.

Each agent solves its local FHOCP in C++ using acados~\cite{verschueren2022acados} with an SQP-type nonlinear programming solver and HPIPM~\cite{frison2020hpipm} as QP solver. The prediction horizon is $N=20$, and the control period is $T_{\mathrm{ctrl}}=40\,\mathrm{ms}$.
The communication radius is $r_{\mathrm{com}}=1.25\,\mathrm{m}$. To account for model mismatch, state-estimation errors, and communication delays, the connectivity contracts are built with the tightened radius $r_{\mathrm{com}}-\delta$, where $\delta=r_{\mathrm{buffer}}=0.05\,\mathrm{m}$.

The connectivity contracts are implemented as polytopic inner approximations of the balls in~\eqref{eq:pairwise_ball_contract}. We use regular polygons with $R=8$ vertices, which was found to provide a good tradeoff between approximation quality and computational complexity. Collision-avoidance contracts are constructed as in~\cite{bodmer2026anytime}. Communication is implemented through ROS publishers and subscribers. Each agent exchanges its proposed position trajectory with the neighbors in the maintained contract graph and discards messages from agents outside the communication range. Artificial communication delays are introduced by allowing messages to be exchanged every $T_{\mathrm{com}}=5\,\mathrm{ms}$.
In the experiments, we update the spanning subgraph $\mathcal{T}_t$ at each time $t$ by having the agents construct the minimum spanning tree of the network with Kruskal's algorithm in a distributed manner.

\subsection{Simulation results}

We first evaluate the method in simulation on a reference-tracking task with $M=7$ agents. The agents are placed in four cluttered environments and are assigned sampled reference positions. In total, we consider 31 initial/reference configurations. The agents must reach their references while satisfying collision-avoidance constraints and maintaining connectivity of the communication graph.

We compare three methods: a contract-based DMPC controller using collision-avoidance contracts only~\cite{bodmer2026anytime}, the proposed controller, and the eigenvalue-constrained MPC of~\cite{carron2023multi}. The latter enforces $\lambda_2(L(p_{k|t}))\ge\underline{\lambda}_2=0.1$ at every prediction step on a Laplacian with smooth distance-dependent weights and solves the resulting coupled nonconvex program by SQP; the weights are chosen such that the constraint certifies connectivity of the communication graph. We report it both with the SQP iterated to convergence and with a single real-time iteration (RTI) per control period, as used by the proposed scheme. Figure~\ref{fig:sim_traj} shows a representative run of the proposed controller. Without connectivity contracts, the agents may move toward their references in a way that disconnects the communication graph. In contrast, the proposed method preserves the maintained spanning subgraph throughout the maneuver while still allowing the agents to progress toward their references.

\begin{figure}[t]
	\centering
	\includegraphics[width=0.99\columnwidth]{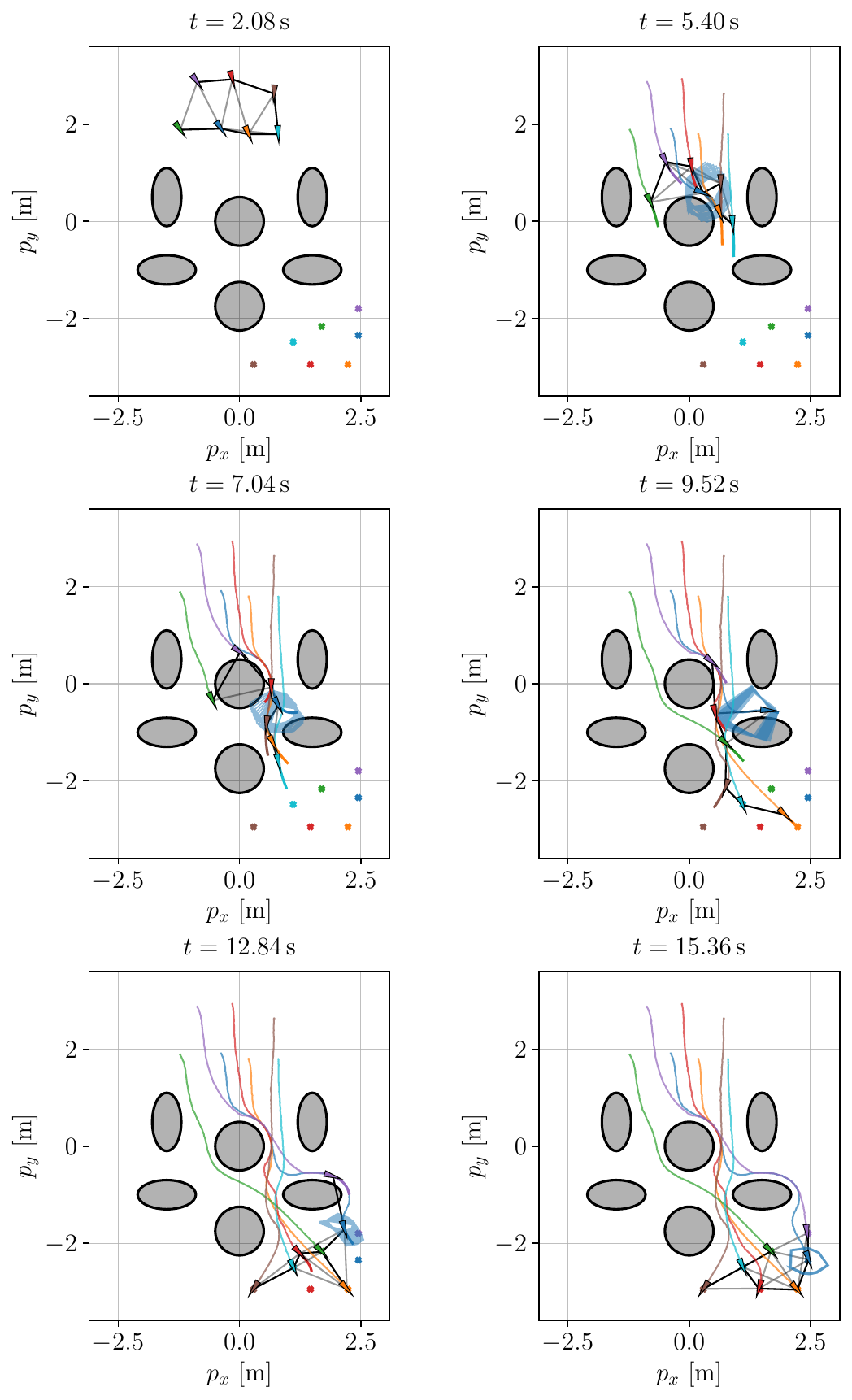}
	\caption{Simulation example with \(M=7\) agents in a cluttered environment. Colored triangles denote the agents and colored squares their reference positions. The proposed connectivity contracts preserve a connected spanning
		subgraph while the agents track their references. One agent shows the resulting
		connectivity contracts.}
	\label{fig:sim_traj}
\end{figure}

The closed-loop behavior is summarized in Table~\ref{tab:sim_metrics}. We report the number of connectivity violations, the minimum algebraic connectivity $\min_t\lambda_2(L(p(t)))$, the minimum collision distance, and the per-agent solve time statistics.
The eigenvalue-constrained MPC also maintains connectivity when its SQP is iterated to convergence, but at a substantially higher computational cost, since it solves a coupled nonconvex program over all agents. With a single real-time iteration its solve time becomes comparable to that of the local problems of the proposed scheme, but one Newton step on the linearized eigenvalue and collision constraints does not track the solution of the nonconvex problem: the communication graph disconnects in 4 runs and the safety distance is violated in 6 runs.

\begin{table}[t]
	\centering
	\caption{Simulation metrics over 31 runs in four cluttered environments.}
	\label{tab:sim_metrics}
	\setlength{\tabcolsep}{2pt}
	\footnotesize
	\begin{tabular}{@{}lcccc@{}}
		\toprule
		                             & Without conn.                      & Proposed & \multicolumn{2}{c}{Fiedler MPC~\cite{carron2023multi}}           \\
		\noalign{\global\aboverulesep=0.1ex \global\belowrulesep=0.15ex}
		\cmidrule(lr){4-5}
		\noalign{\global\aboverulesep=0.4ex \global\belowrulesep=0.65ex}
		Metric                       & contracts~\cite{bodmer2026anytime} &          & SQP                                                    & SQP-RTI \\
		\midrule
		Runs with conn. violations   & 31                                 & 0        & 0                                                      & 4       \\
		Minimum $\lambda_2(L(p(t)))$ & 0.0                                & 0.1981   & 0.1981                                                 & 0.0     \\
		Min. collision distance [m]  & 0.1341                             & 0.1175   & 0.1200                                                 & 0.0806  \\
		Median solve time [ms]       & 2.02                               & 2.31     & 219                                                    & 19      \\
		95th perc. solve time [ms]   & 3.77                               & 4.29     & 7661                                                   & 50      \\
		\bottomrule
	\end{tabular}
\end{table}



\subsection{Hardware experiment}

The same controller is deployed on the car-like robots equipped with an ESP32 microcontroller and Wi-Fi communication. Motion-capture markers are used for pose tracking, and the states are estimated using an extended Kalman filter.

The hardware experiment uses $M=7$ vehicles and the same reference position tracking tasks in a cluttered environment. The hardware experiments are conducted with and without connectivity contracts and repeated a total of five times.

%

Figure~\ref{fig:hw_traj} shows the resulting hardware trajectories when using connectivity contracts. The vehicles track their references while avoiding obstacles and maintaining connectivity.
Figure~\ref{fig:hw_metrics} plots the corresponding algebraic connectivity for the experiment with and without the connectivity contracts. We can see that without the connectivity contracts $\lambda_2$ drops to zero indicating a disconnected network. In contrast using the proposed connectivity contracts the system remains connected ($\lambda_2 > 0$). The closed-loop behavior is summarized in Table~\ref{tab:hardware_metrics}.

\begin{figure}[h]
	\centering
	\includegraphics[width=\columnwidth]{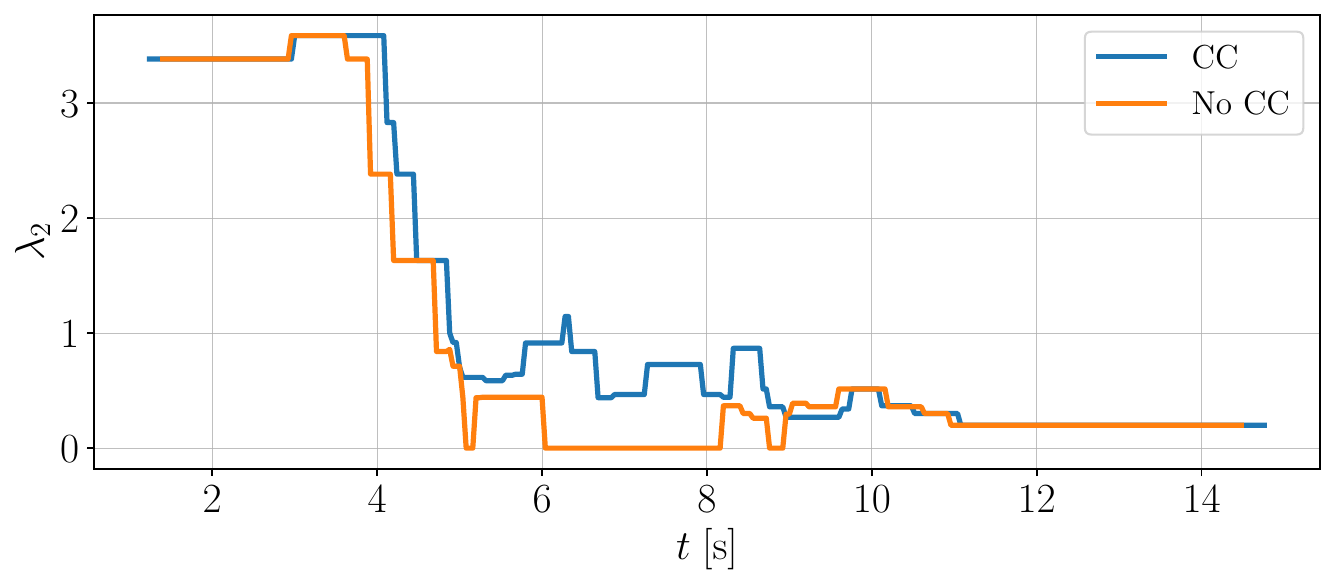}
	\caption{Algebraic connectivity $\lambda_2(L(p(t)))$ over time for one of the hardware experiments with and without connectivity contracts~(CC).}
	\label{fig:hw_metrics}
\end{figure}

\begin{figure*}[t]
	\centering
	\begin{minipage}{0.49\textwidth}
		\includegraphics[width=\linewidth, trim={0 858.6pt 0 0}, clip]{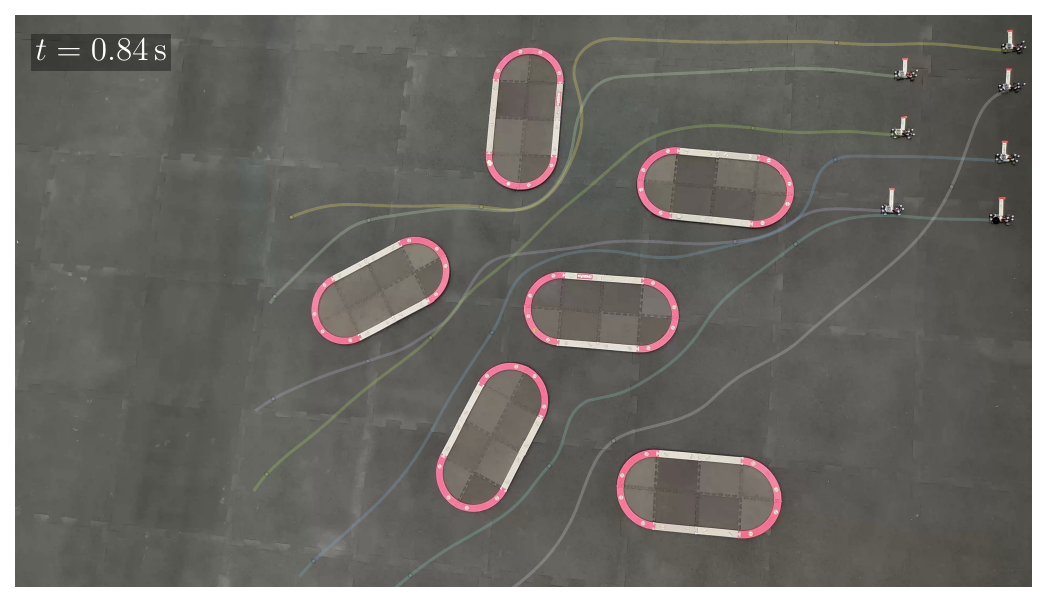}
	\end{minipage}\hfill
	\begin{minipage}{0.49\textwidth}
		\includegraphics[width=\linewidth, trim={0 572.4pt 0 286.2pt}, clip]{figures/real_1_trajectories_overlay.pdf}
	\end{minipage}\\[2pt]
	\begin{minipage}{0.49\textwidth}
		\includegraphics[width=\linewidth, trim={0 286.2pt 0 572.4pt}, clip]{figures/real_1_trajectories_overlay.pdf}
	\end{minipage}\hfill
	\begin{minipage}{0.49\textwidth}
		\includegraphics[width=\linewidth, trim={0 0 0 858.6pt}, clip]{figures/real_1_trajectories_overlay.pdf}
	\end{minipage}
	\caption{Hardware experiment with 1/28-scale miniature vehicles. The agents
		track their references while satisfying collision-avoidance and
		connectivity contracts.}
	\label{fig:hw_traj}
\end{figure*}

\begin{table}[t]
	\centering
	\caption{Hardware metrics.}
	\label{tab:hardware_metrics}
	\setlength{\tabcolsep}{4pt}
	\begin{tabular}{lcc}
		\toprule
		Metric                          & Without conn. contracts & Proposed \\
		\midrule
		Minimum $\lambda_2(L(p(t)))$    & 0.0                     & 0.198    \\
		Minimum collision distance [m]  & 0.139                   & 0.132    \\
		Median solve time [ms]          & 2.26                    & 3.04     \\
		95th percentile solve time [ms] & 3.64                    & 5.00     \\
		\bottomrule
	\end{tabular}
\end{table}

\section{Conclusions}\label{sec:conclusions}
We presented a contract-based DMPC framework that enforces connectivity of
the time-varying communication graph through convex, neighbor-local
constraints, replacing the nonconvex Fiedler-eigenvalue condition with
pairwise ball contracts built from a single one-hop exchange. The resulting
local FHOCPs are fully decoupled and admit recursive feasibility, collision
avoidance, and connectivity guarantees. Simulation and hardware experiments
on miniature car-like robots confirmed the practicality of the approach.
Future work will address heterogeneous communication radii, asynchronous and
delay-robust contract updates, and adaptive selection of the contract graph
and contract shapes to reduce conservatism.

\bibliographystyle{IEEEtran}
\bibliography{biblio}

\end{document}